\crefname{section}{Section}{Sections}
\crefname{appendix}{Appendix}{Appendices}
\crefname{table}{Table}{Tables}
\crefname{theorem}{Theorem}{Theorems}
\newtheorem{lemma}{Lemma}
\crefname{lemma}{Lemma}{Lemmas}
\crefname{assumption}{Assumption}{Assumptions}
\crefname{proposition}{Proposition}{Propositions}
\crefname{corollary}{Corollary}{Corollaries}
\crefname{definition}{Definition}{Definitions}
\theoremstyle{remark}
\crefname{algocf}{Algorithm}{Algorithms}
\newcommand{\ro}[1]{\left(#1\right)} 
\newcommand{\cu}[1]{\left\{#1\right\}} 
\newcommand{\floor}[1]{\left\lfloor#1\right\rfloor}
\newcommand{\E}{\operatorname{\mathbb{E}}}
\newcommand{\R}{\mathbb{R}}
\newcommand{\n}[1]{\operatorname{{\cal N}}\left(#1\right)}
\newcommand{\blue}[1]{{\color{blue}#1}}
\newcommand{\cyan}[1]{{\color{cyan}#1}}
\newcommand{\cM}{{\cal M}}
\newcommand{\mask}{{\bf M}}
\newcommand{\xh}{\widehat{x}}
\newcommand{\ph}{\widehat{p}}
\newcommand{\pt}{\widetilde{p}}
\newcommand{\qt}{\widetilde{q}}
\newcommand{\Omegab}{\overline{\Omega}}
\newcommand{\dist}{\operatorname{dist}}
\newcommand{\prt}[1]{\texttt{\seqsplit{#1}}}
\title{Plug-and-Play Controllable Generation for Discrete Masked Models}
\author{Wei Guo$^*$, Yuchen Zhu$^*$, Molei Tao$^\dagger$, Yongxin Chen$^\dagger$\\
Georgia Institute of Technology\\
\texttt{\{wei.guo,yzhu738,mtao,yongchen\}@gatech.edu}
}
\begin{document}
\maketitle
\def\thefootnote{*}\footnotetext{Equal contribution.}\def\thefootnote{$\dagger$}\footnotetext{Equal supervision.}\def\thefootnote{\arabic{footnote}}

\begin{abstract}
This article makes discrete masked models for the generative modeling of discrete data controllable. The goal is to generate samples of a discrete random variable that adheres to a posterior distribution, satisfies specific constraints, or optimizes a reward function. This methodological development enables broad applications across downstream tasks such as class-specific image generation and protein design. Existing approaches for controllable generation of masked models typically rely on task-specific fine-tuning or additional modifications, which can be inefficient and resource-intensive. To overcome these limitations, we propose a novel plug-and-play framework based on importance sampling that bypasses the need for training a conditional score. Our framework is agnostic to the choice of control criteria, requires no gradient information, and is well-suited for tasks such as posterior sampling, Bayesian inverse problems, and constrained generation. We demonstrate the effectiveness of our approach through extensive experiments, showcasing its versatility across multiple domains, including protein design.
\end{abstract}

\section{Introduction}
\label{sec:intro}
Modeling complex discrete probability distributions in high-dimensional spaces is a crucial challenge across multiple domains in generative AI, including language, vision, audio, and biology. Among the various approaches, discrete masked models have emerged as powerful tools, offering robust solutions for generating and understanding discrete data. Notable examples include BERT for language modeling \citep{devlin2019bert}, MaskGIT for image synthesis \citep{chang2022maskgit}, DNABERT for DNA modeling \citep{ji2021dnabert, zhou2023dnabert}, the ESM series for protein generation \citep{river2021biological,lin2023evolutionaryscale,hayes2024simulating}, and the more recent masked discrete diffusion models (see, e.g., \cite{lou2024discrete,ou2024your,sahoo2024simple,shi2024simplified,zheng2024masked}). These models typically learn the conditional distribution of each masked position given a partially masked data sequence, allowing for iterative decoding to generate a full sequence during inference.

In many practical applications of masked models, the objective extends beyond generating realistic samples from the data distribution to doing to in a \emph{controlled} manner. This involves generating samples that align with specific constraints, conditions, or prompts, often by sampling from a conditional data distribution or maximizing a reward function \citep{zhang2023a}. Controlled generation is crucial in tasks such as  (1) posterior sampling, where samples are drawn from a posterior distribution conditioned on observed data; (2) constrained generation, where the aim is to produce samples that meet predefined constraints. These applications highlight the growing need for generative models that not only capture the underlying data distribution, but also allow for flexible control over the generated outputs.

For controllable generation, it is desirable to establish a framework of \emph{plug-and-play samplers} that allows for efficient sampling from the desired controlled distribution without requiring further training or fine-tuning of the underlying pre-trained model, given any suitable control criterion. This approach is computationally advantageous, as it avoids the costly and time-consuming process of retraining the model for each new task, making it a highly scalable and adaptable solution for real-world applications.

However, existing work on plug-and-play controllable generation primarily focused on the continuous domain, such as continuous diffusion models \citep{chung2023diffusion,song2023loss,huang2024symbolic}, and often requires the differentiability in the control criteria. In contrast, controllable generation for discrete generative models often relies on learning-based approaches \citep{dathathri2020plug,nisonoff2024unlocking,li2024derivative}, and to the best of our knowledge, there is no plug-and-play sampler in this domain.

In this paper, we address this gap by developing a plug-and-play framework for controllable generation using discrete masked models, eliminating the need for task-specific fine-tuning. Our algorithm operates through iterative unmasking and remasking. At each step, given the unmasked positions, we apply a mean-field approximation to estimate the conditional distribution of the masked positions, sample from it via Monte Carlo, and employ importance sampling to filter the most likely samples. We then remask a portion of the newly generated positions, and repeat this unmasking-remasking process for several times until all positions are unmasked. Since the complexity of querying the masked model to obtain conditional probabilities is typically much higher than querying the reward function in most of the real-world applications, the Monte Carlo estimation introduces minimal computational overhead, keeping the sampling and filtering process efficient. In our experiments, high-quality samples can be obtained with approximately $10$ queries to the masked model and around $1000$ Monte Carlo samples, demonstrating the effectiveness of our proposed algorithm.

In summary, our contributions are as follows:
\begin{itemize}
    \item We tackle the problem of plug-and-play controllable generation for discrete masked models, introducing an efficient and economical paradigm for sampling from these models.
    \item We propose a novel framework based on mean-field approximation of multi-dimensional discrete distributions and iterative masking-unmasking. This fine-tuning-free approach enables sample generation that satisfies control criteria for any (potentially non-differentiable) reward functions. To the best of our knowledge, this is the first plug-and-play controllable sampler for discrete masked models.
    \item We demonstrate the versatility of our method through multiple experiments, including sampling sequence of integers with equality constraint and designing protein sequences, highlighting its adaptability and effectiveness across diverse tasks.
\end{itemize}

\begin{figure}[t]
    \centering
    \includegraphics[clip, trim=0 170 0 0, width=\linewidth]{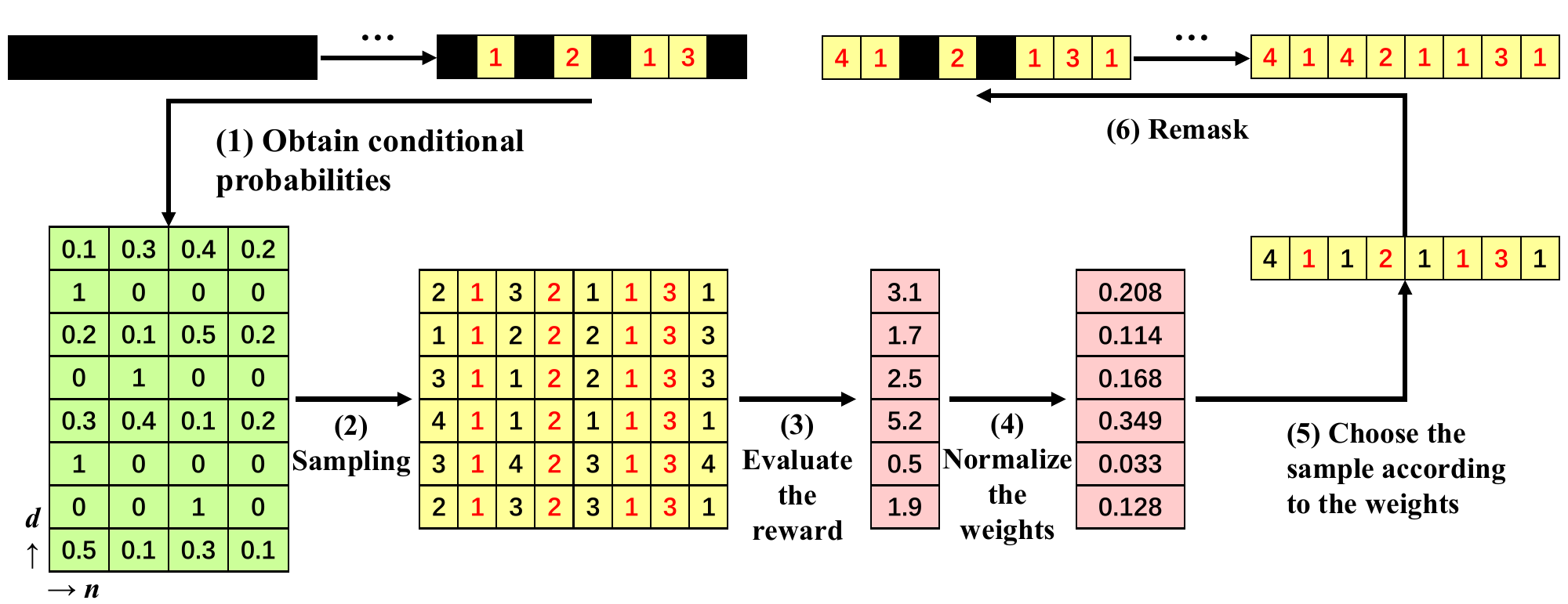}
    \caption{A demonstration of \cref{alg:sample} with vocabulary size $N=4$, sequence length $D=8$, and number of Monte Carlo estimate $K=6$.}
    \label{fig:demo}
\end{figure}

\textbf{Notations.}  
The indicator function $1_A$ of a statement $A$ is $1$ if $A$ is true and $0$ if otherwise.
We will use superscripts for indexing a vector, e.g., $x=(x^1,\dots,x^D)$.
Given $x\in\R^D$ and a set $\Omega\subset\{1,\dots,D\}$, the slicing $x^\Omega$ is defined as $(x^d:~d\in\Omega)$. 
Given a partition $(\Omega_1,\Omega_2)$ of $\{1,\dots,D\}$ (i.e., $\Omega_1,\Omega_2$ are disjoint and their union is $\{1,\dots,D\}$), and two vectors $x=(x^d:d\in\Omega_1)$ and $y=(y^d:d\in\Omega_2)$, their concatenation $x\oplus y$ is a $D$-dimensional vector whose $d$-th entry is $x^d1_{d\in\Omega_1}+y^d1_{d\in\Omega_2}$. 
Finally, $f(x)\propto_x g(x)$ means that $f(x)=c\cdot g(x)$ for some constant $c>0$ that does not depend on $x$.
\section{Preliminaries and Problem Formulation}
\label{sec:prelim}
\subsection{Discrete Masked Models}
\label{sec:prelim_mask}
\textbf{Discrete masked models} (or simply \textbf{masked models}) are designed to learn the distribution of sequential discrete data. Let $\{1,\dots,N\}$ represent a vocabulary of $N$ tokens, and consider a random sequence $X=(X^1,\dots,X^D)\in\{1,\dots,N\}^D$ of length $D$. For example, $X$ can be a sentence with $D$ words, a protein composed of $D$ amino acids, or a discrete latent representation of images tokenized by a vector quantized variational autoencoder (VQ-VAE) \citep{vandenoord2017neural}. To learn the probability distribution $p$ of $X$ given i.i.d. samples of $X\sim p$, masked models are trained by randomly replacing certain positions in the sequence with a masked token $\mask$, and learning the probability of the masked positions conditional on the unmasked portion of the sequence. 

Throughout this paper, for a partially observed sequence $x\in\{1,\dots,N,\mask\}^D$, we use $\Omega:=\{d:x^d\ne\mask\}$ and $\cM:=\{d:x^d=\mask\}$ to denote the unmasked and masked positions, respectively. Formally, a masked model $\ph$ takes such a sequence as input and output a matrix $\ph(x)\in\R_{\ge0}^{D\times N}$ that approximates the following probability distribution:
$$\ph(x)_{d,n}\approx p(X^d=n|X^\Omega=x^\Omega).$$
The rows in $\ph$ corresponding to the unmasked positions in $\Omega$ are trivial, as the probabilities are either $0$ or $1$. To learn the remaining entries, masked models are typically trained by minimizing the cross-entropy loss:
$$\min_{\ph}\E_{X\sim p}\E_{\text{random subset}~\Omega\subset\{1,\dots,D\}}\ro{-\log\sum_{d\not\in\Omega}\ph\ro{X^\Omega\oplus\mask}_{d,X^d}},$$
where $X^\Omega\oplus\mask$ represents the sequence obtained by replacing all entries not in $\Omega$ with $\mask$. 

During inference, a direct approach is to initialize with the fully-masked sequence, select an order $\sigma$ (a permutation of $\{1,\dots,D\}$), and autoregressively sample $X^{\sigma(t)}$ based on $X^{\sigma(<t)}$ for $t=1,2,\dots,D$. However, this scheme requires $D$ queries to the masked model, which is inefficient for long sequences. To balance accuracy and efficiency, instead of unmasking one position at a time, one may introduce a decreasing function $\gamma:[0,1]\to[0,1]$ known as an \textbf{unmasking schedule}, which determines the number of remaining masked tokens at each step \citep{chang2022maskgit}. At the $t$-th step ($t=1,2,\dots,T$) out of $T$ total steps, given $x$ with observed positions $\Omega$, the following steps are implemented:
\begin{enumerate}
    \item Predict the probability of the masked positions $p(X^d=\cdot|X^\Omega=x^\Omega)\approx\ph(x)_{d,\cdot}$, $d\notin\Omega$ using the masked model.
    \item Independently sample the masked values $x^d\sim\ph(x)_{d,\cdot}$, $d\notin\Omega$.
    \item Remask $\floor{\gamma(t/T)D}$ newly-generated positions with the lowest predicted probability $\ph(x)_{d,x^d}$.
\end{enumerate}
Such steps will also be utilized in the design of our algorithm.

There are several equivalent formulation of masked models. First, as demonstrated in the sampling procedure outlined above, they can be interpreted as any-order autoregressive models \citep{hoogeboom2022autoregressive,shih2022training}, where the joint distribution is factorized using any arbitrary order $\sigma$, and the model learns the conditional distributions $p(X^{\sigma(d)}|X^{\sigma(<d)}),~d\in\{1,\dots,D\}$. Second, recent research \citep{ou2024your,sahoo2024simple,shi2024simplified,zheng2024masked} has shown that masked models are equivalent to the masked discrete diffusion model, which involves reversing a continuous-time Markov chain that transform any data distribution into the distribution concentrated on the sequence with all masked states.

\subsection{Controllable Generation}
\label{sec:prelim_control}
This paper focuses on the following task of \textbf{controllable generation} for masked models: given a pretrained masked model $\ph(\cdot)$ that generates from a data distribution $p$, sample from a modified distribution
$$q(x)=\frac{1}{Z}r(x)p(x),~x\in\{1,\dots,N\}^D,$$
where $r(x)$ is a non-negative \textbf{reward function}, and the normalizing constant $Z=\sum_{x}r(x)p(x)$ is unknown. This formulation encompasses various applications, including:
\begin{itemize}
    \item \textbf{Posterior sampling for Bayesian inference}: Suppose $X\sim p(x)$ and we have a conditional distribution $p(y|x)$ for a related random variable $Y$, where $Y|X=x\sim p(y|x)$ serves as the reward function. Given an observation $Y=y$, the posterior distribution of $X$ is $p(x|y)\propto_x p(y|x)p(x)$. For instance, if $X$ is a tokenized image and $Y$ is its corresponding class label given by a classifier, then sampling from $X|Y=y$ would generate an image belonging to a specific class $y$.  
    \item \textbf{Constrained generation}: Given a specific subset $S\subset\{1,\dots,N\}^D$ of interest, we define the reward function as the indicator function $r(x)=1_{x\in S}$. In this case, $q$ represents the distribution $p$ truncated to the set $S$. For example, $X$ represents DNA sequences, and $S$ is the set of DNA sequences whose percentage of \texttt{A} is less than $30\%$.
\end{itemize}
We also assume that evaluating the reward function $r(\cdot)$ is significantly less computationally expensive than querying the mask model $\ph(\cdot)$, which is a common scenario in most of the real-world applications, and serves as an important starting point for our algorithm's design.
\section{Controllable Generation for Masked Models}
\label{sec:method}
In this section, we present our framework for plug-and-play controllable generation of masked models. We begin by drawing insights from the plug-and-play conditional generation for continuous diffusion models in \cref{sec:method_insp}, and then propose our novel algorithms tailored specifically for discrete masked models in \cref{sec:method_mask}.

\subsection{Existing Plug-and-play Samplers for Continuous Diffusion Models}
\label{sec:method_insp}
In continuous diffusion model, the data distribution is $X_0\sim p(x_0)$, and a diffusion process $(X_t\sim p_t)_{t\in[0,T]}$ transforms data to noise following $X_t|X_0=x_0\sim\n{x_0,\sigma^2_tI}$. Leveraging the data samples and the diffusion process, one can learn the score function $\nabla_{x_t}\log p_t(x_t)$ for all $t>0$. Given a condition variable $Y$ with distribution $p(y|x_0)$ conditional on $X_0=x_0$, the task of sampling from the posterior distribution $p(x_0|y)$ boils down to estimating $\nabla_{x_t}\log p_t(y|x_t)$ for all $t>0$, where $p_t(y|x_t)$ is the predicted distribution of $Y$ given \emph{noisy} sample $X_t=x_t$.

To estimate this quantity in a training-free way, \cite{chung2023diffusion} observed that $p_t(y|x_t)=\E_{p(x_0|x_t)}p(y|x_0)$, and proposed to approximate the unknown distribution $p(x_0|x_t)$ by the point mass at $\xh_0(x_t):=\E(X_0|X_t=x_t)=x_t+\sigma^2_t\nabla_{x_t}\log p_t(x_t)$, resulting in $p_t(y|x_t)\approx p(y|\xh_0(x_t))$. \cite{song2023loss} later proposed a Gaussian approximation centered at $\xh_0(x_t)$ with a suitable variance, estimating the expectation by the empirical mean over i.i.d. Gaussian samples. Finally, one step of backward propagation yields the gradient of $\log p(x_0|x_t)$ with respect to $x_t$. Throughout this process, only one query to the score model $\nabla_{x_t}\log p_t(x_t)$ is required, while the conditional probability $p(y|x_0)$ is generally easy to obtain, minimizing computational overhead.

To sum up, the key ingredient of this approach is the \underline{approximation of the distribution $p(x_0|x_t)$}, which involves predicting the clean data $X_0$ given a noisy sample $X_t=x_t$.

\subsection{Controllable generation for Masked Models}
\label{sec:method_mask}
We first continue the exploration of conditional generation, and then extend our methodology to general controllable generation problems.

In the discrete case, unlike the Gaussian noise $X_t|X_0=x_0\sim\n{x_0,\sigma^2_tI}$, we now have a masking process $X_t|X_0=x_0$ obtained by independently masking each position with a probability that depends on $t$. As the required entity is not the score function here, we cannot directly apply the continuous diffusion model's strategy. Moreover, the discrete state space lacks a Gaussian distribution, and the posterior mean $\E(X_0|X_t=x_t)$ is meaningless. However, a simple yet effective alternative exists: a mean-field approximation.

Our goal is to sample from $p(x_0|x_t,y)$. By conditional independence of $X_t$ and $Y$ given $X_0$,
\begin{align*}
    p(x_0|x_t,y)&\propto_{x_0} p(x_0,x_t,y)=p(x_0)p(x_t|x_0)p(y|x_0)=p(x_0|x_t)p(x_t)p(y|x_0)\\
    \implies p(x_0|x_t,y)&\propto_{x_0}\E_{p(x_0|x_t)}p(y|x_0).
\end{align*}
As $X_t$ is by independently masking each position in $X_0$, we only need to predict the conditional probability of the masked positions in $x_t$ given the observed ones. As the masked model only predicts one-dimensional probability distributions, a straightforward approach is to iteratively unmasking one position at a time, requiring $|\cM|$ queries to the masked model. 

To avoid such computational overhead, we employ a \textbf{mean-field approximation}, i.e., assume that conditional on the observed part $x^\Omega$ of a sequence $x$, each remaining dimension in $\cM$ is independent. Formally,
$$p(X^\cM=u|X^\Omega=x^\Omega)\approx\prod_{d\in\cM}p(X^d=u^d|X^\Omega=x^\Omega),$$
which only requires one query to the masked model. While mean-field approximation introduces some error, it is effective for approximating multidimensional distributions when dependencies between different positions are relatively weak. Similar ideas have been incorporated to the sampling algorithm of MaskGIT \citep{chang2022maskgit} and the backward sampling of Continuous-Time Markov Chain \citep{lou2024discrete,ou2024your,zheng2024masked}.

Building upon our insights into conditional generation, we are ready to present a solution to the general problem of controlled generation. Based on the problem settings in \cref{sec:prelim_control}, we now illustrate our method for sampling from $q$ in a plug-and-play manner. 

Suppose we have a partially observed sequence $x$ during the generation process, with observed and masked positions $\Omega$ and $\cM$. We can calculate the conditional distribution of masked positions given the observed ones under $q$ as follows: for all $u\in\{1,\dots,N\}^{|\cM|}$,
\begin{align}
   q(X^{\cM}=u|X^\Omega=x^{\Omega})&\propto_{u}q(X^\cM=u,X^\Omega=x^\Omega)\nonumber\\
   &\propto_u r(x^\Omega\oplus u)p(X^\cM=u,X^\Omega=x^\Omega)\nonumber\\
   &\propto_u r(x^\Omega\oplus u)p(X^\cM=u|X^\Omega=x^\Omega)\nonumber\\
   &\approx r(x^\Omega\oplus u)\prod_{d\in\cM}p(X^d=u^d|X^\Omega=x^\Omega),\label{eq:cond_prob}
\end{align}
where the last line is obtained by mean-field approximation. As the normalizing constant of this distribution is unknown, one way to sample a $u$ from this distribution is by leveraging importance sampling. Let us first recall the following lemma.

\begin{lemma}[Importance Sampling]
   \label{lem:impsamp}
   Suppose two probability masses or densities $p,q$ are related through $q(x)=\frac{1}{Z}r(x)p(x)$. Then, with $x_1,\dots,x_K$ i.i.d. samples from $p$, one can approximate $q$ with the following weighted empirical distribution:
   $$q(x)\approx\qt(x):=\sum_{k=1}^Kq_k\delta_{x_k}(x),~\text{where}~q_k=\frac{r(x_k)}{\sum_{j=1}^K r(x_j)}.$$
\end{lemma}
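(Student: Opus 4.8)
The plan is to interpret the informal approximation $q\approx\qt$ as a statement about convergence of expectations of bounded test functions (equivalently, weak convergence of the random measure $\qt$ to $q$ as $K\to\infty$), and to establish it via the self-normalizing importance sampling identity combined with the strong law of large numbers.

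First I would record the two expectation identities that follow purely from the defining relation $q(x)=\frac{1}{Z}r(x)p(x)$. Testing against any bounded function $g$ gives
$$\E_q[g(X)]=\frac{\E_p[g(X)r(X)]}{\E_p[r(X)]},$$
since $Z=\sum_x r(x)p(x)=\E_p[r(X)]$. The crucial observation is that the unknown normalizer $Z$ is itself an expectation under $p$, so it cancels in this ratio and never needs to be computed explicitly.

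Next, for i.i.d. draws $x_1,\dots,x_K\sim p$, I would apply the strong law of large numbers separately to the empirical numerator $\frac{1}{K}\sum_k g(x_k)r(x_k)$ and the empirical denominator $\frac{1}{K}\sum_k r(x_k)$, which converge almost surely to $\E_p[g(X)r(X)]$ and $\E_p[r(X)]=Z$, respectively (finiteness being guaranteed when $g$ is bounded and $r$ is $p$-integrable). Provided $Z>0$, i.e.\ $r$ is not $p$-almost-surely zero, which is exactly the condition needed for $q$ to be well defined, taking the ratio of these two limits yields
$$\E_{\qt}[g(X)]=\sum_{k=1}^K q_k\,g(x_k)=\frac{\sum_{k=1}^K r(x_k)g(x_k)}{\sum_{j=1}^K r(x_j)}\xrightarrow{K\to\infty}\frac{\E_p[g(X)r(X)]}{\E_p[r(X)]}=\E_q[g(X)]$$
almost surely. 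Since this holds for every bounded $g$, the weighted empirical measure $\qt$ converges weakly to $q$, which is the precise sense in which $q\approx\qt$.

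I expect the substantive content to lie less in any single calculation than in correctly framing the claim, since the statement as written is deliberately loose: the main obstacle is pinning down the mode of approximation, namely convergence of test-function expectations (weak convergence) rather than pointwise convergence of masses, which cannot literally hold for a discrete random measure supported on only $K$ atoms, and isolating the minimal hypothesis $Z=\E_p[r(X)]\in(0,\infty)$ under which the ratio limit is valid. Once the claim is stated this way, the remaining steps are routine applications of the law of large numbers and continuity of the map $(a,b)\mapsto a/b$ at points with $b\ne0$.
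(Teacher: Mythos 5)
Your proposal is correct, and it rests on exactly the same identity the paper uses: writing $Z=\E_p[r]$ so that the unknown normalizer becomes an expectation under $p$ and cancels in the self-normalized ratio. The difference is one of rigor rather than of route. The paper's proof is a two-line formal substitution: replace $p$ by the empirical measure $\pt=\frac{1}{K}\sum_k\delta_{x_k}$ simultaneously in the numerator $r(x)p(x)$ and in the denominator $\E_p r$, and observe that the resulting object is literally $\qt$. This leaves the symbol ``$\approx$'' unexplained, since pointwise the atomic measure $\qt$ does not approximate $q$ at all. You supply precisely the missing interpretation: the approximation holds in the sense of test-function expectations, i.e.\ $\E_{\qt}[g]\to\E_q[g]$ almost surely for bounded $g$, proved by applying the strong law of large numbers separately to $\frac{1}{K}\sum_k g(x_k)r(x_k)$ and $\frac{1}{K}\sum_k r(x_k)$ and using continuity of division away from zero. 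You also isolate the minimal hypothesis $Z=\E_p[r]\in(0,\infty)$, which the paper never states (though in its finite discrete setting it is automatic once $q$ is well defined). In short, your argument is the standard rigorous justification of self-normalized importance sampling, of which the paper's proof is the informal plug-in shadow; nothing in your write-up conflicts with the paper, and it would strengthen the lemma if one wanted a precise statement of the mode of convergence.
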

\begin{proof}
   Since $p(x)\approx\pt(x):=\frac{1}{K}\sum_{k=1}^{K}\delta_{x_k}(x)$, we have the following approximation of $q$:
   $$q(x)=\frac{r(x)p(x)}{\E_p r}\approx\frac{r(x)\pt(x)}{\E_{\pt}r}=\frac{r(x)\sum_{k=1}^{K}\delta_{x_k}(x)}{\sum_{k=1}^Kr(x_k)}=\qt(x).$$
\end{proof}

Thanks to \cref{lem:impsamp}, we can approximately sample from the distribution in \cref{eq:cond_prob} by independently sampling each dimension $d\in\cM$ conditional on the observed sequence $x^\Omega$ and then obtain their corresponding weights, using which we can sample a $u$.

Although this process generates a full sequence $x$, its alignment to the target distribution $q$ may be sub-optimal due to the lost of dependencies in mean-field approximation, especially when $|\cM|$ is large (i.e., many positions are unmasked in a row). To address this, we further introduce a remasking step as discussed in \cref{sec:prelim_mask}: remask some of positions in $\cM$ according to a remasking schedule $\gamma$ (e.g., by sampling a random subset of $\cM$), so that in the $t$-th step among the $T$ total steps ($t=1,\dots,T$), after remasking, the remaining number of masked tokens is $\floor{\gamma(t/T)D}$. In our experiments, we found that this uniform remasking strategy performs well. However, exploring more advanced remasking strategies is a promising area for future research.

We summarize the entire procedure in \cref{alg:sample}. Notably, our algorithm can be easily extended for the inpainting task, i.e., given a subset $\Omegab$ of $\{1,\dots,D\}$ that has known values $x^{\Omegab}$, we aim to sample the remaining positions according to $q(X^{\Omegab^\complement}=\cdot|X^{\Omegab}=x^{\Omegab})$. This modified version of the algorithm is provided in \cref{alg:sample_inpaint}.

\begin{algorithm}[t]
   \caption{Plug-and-play controllable sampler of discrete masked models}
   \label{alg:sample}
   \SetAlgoLined
   \SetCommentSty{emph}
   \KwIn{Vocabulary size $N$, length of sequence $D$, reward function $r(\cdot)$, masked model $\ph(\cdot)$, number of unmasking steps $T$, unmasking schedule $\gamma:[0,1]\to[0,1]$, number of Monte Carlo samples $K$.
      }
   \KwOut{An approximate sample $x$ from the distribution $q(x)\propto r(x)p(x)$.}
   Initialize $x=(\mask,\ldots,\mask)$, the mask positions $\cM=\{1,\ldots,D\}$, and the observed positions $\Omega=\varnothing$\;
   \For{$t=1$ \KwTo $T$}{
      Compute the probabilities $\{\ph(x)_{d,n}\approx p(X^d=n|X^\Omega=x^\Omega):d\in\cM,~n\in\{1,\ldots,N\}\}$ from the masked model\;
      \For{$d\in\cM$, $k\in\{1,\ldots,K\}$ (in parallel)}{
         Independently sample $u_{(k)}^d\sim \ph(x)_{d,\cdot}$\;
      }
      \For{$k\in\{1,\ldots,K\}$ (in parallel)}{
         Assign the sample $u_{(k)}=(u_{(k)}^d:d\in\cM)$ with a weight $w_{(k)}=r(x^\Omega\oplus u_{(k)})$\;
      }
      \For{$k\in\{1,\ldots,K\}$ (in parallel)}{
         Normalize the weights $w_{(k)}$ by $w_{(k)}\gets\frac{w_{(k)}}{\sum_{j=1}^{K}w_{(j)}}$\;
      }
      Obtain one sample of $x^\cM$ via selecting $u_{(k)}$ with probability $w_{(k)}$\;
      Remask $\floor{\gamma(t/T)D}$ positions in $x^\cM$ according to a defined rule (e.g., uniform remasking: sample a subset $\cM_0$ of $\cM$ with $\floor{\gamma(t/T)D}$ elements uniformly at random, and remask the positions $x^d,~d\in\cM_0$)\;
      Update the masked positions $\cM\gets\{d:x^d=\mask\}$ and the observed positions $\Omega\gets\cM^\complement$\;
   }
   \Return{$x$.}
\end{algorithm}
\section{Related Work}
\label{sec:rel_work}
\textbf{Conditional generation based on guidance.} As an important task in controllable generation, conditional generation aims to generate a random variable $X\sim p(x)$ (e.g., image) given another random variable $Y$ known as the condition (e.g., the text description of an image). A popular approach is guidance: for example, in continuous diffusion model, the classifier guidance \citep{dhariwal2021diffusion} learns a classifier $p(y|x)$ that predicts the condition $Y$ given a \emph{noisy} sample of $X$ and leverages its information to generate $X$ conditional on $Y=y$. Classifier-free guidance \citep{ho2022classifier} trains a score model that approximates both the conditional and unconditional score functions, using a combination of them for conditional generation. These approaches can be extended to the discrete diffusion model (see \cite{nisonoff2024unlocking}). 
 
\textbf{Controllable generation for discrete generative models.} The study of controllable generation is an emerging area in language modeling (see, e.g., \cite{zhang2023a} for a review). A notable work, \cite{dathathri2020plug}, proposed applying gradient updates to the key-value cache in transformers, a task-agnostic approach but requiring fine-tuning during inference. For diffusion models, a recent work \cite{li2024derivative} introduced soft value-based decoding, a derivative-free algorithm that requires pre-sampled trajectories $x_0,x_1,\dots,x_T$ of discrete diffusion model to estimate a conditional expectation. This method does not exploit the special properties of the masking process. To the best of our knowledge, there are no fine-tuning-free samplers for controllable generation in discrete masked models.
\section{Experimental Results}
\label{sec:exp}
\subsection{Toy Experiment on Sampling Equality-constrained Sequences}
\label{sec:exp_toy}
We first apply our controllable generation framework to constrained sampling, demonstrating its outstanding efficiency in challenging tasks where the constraint set is extremely sparse.

Consider the state space $\{1,\dots,N\}^D$ comprising sequences of integers $x=(x^1,\dots,x^D)$, where we fix the sequence length $D$ to $10$. Let the unconditional distribution be the uniform distribution. In this case, the masked model has a closed form, requiring no training:
$$\ph(x)_{d,n}=p(X^d=n|X^\Omega=x^\Omega)=\begin{cases}1/N,&d\in\cM\\\delta_{x^d,n},&d\in\Omega\end{cases}.$$ 
We study sampling from the uniform distribution restricted to the following set:
$$S_N=\cu{x\in\{1,\dots,N\}^D:\phi(x):=x^1-x^2x^3-x^4+x^5x^6x^7+x^8+x^9-x^{10}=0}.$$
The ratio between the cardinalities of $S_N$ and the entire state space $\{1,\dots,N\}^D$ is extremely low: approximately $1.07\%$ when $N=10$, $0.20\%$ when $N=20$, and $0.07\%$ when $N=30$. 

We use \cref{alg:sample} to sample from the uniform distribution constrained on $S_N$, and present the results in \cref{fig:toy}. We observe that the number of Monte Carlo samples, $K$, significantly impacts the quality of generated samples, while the number of steps of unmasking $T$ has a less pronounced effect, likely due to the short sequence length. Further experimental details are provided in \cref{app:exp_toy}.

\begin{figure}[h]
    \centering
    \subfigure[$N=10$.]{\includegraphics[width=0.3\textwidth]{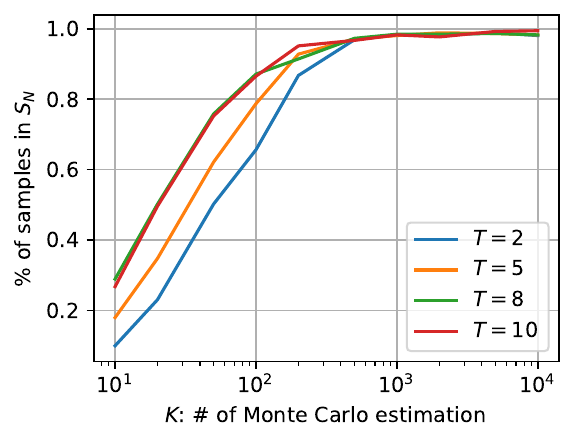}}
    \hfill
    \subfigure[$N=20$.]{\includegraphics[width=0.3\textwidth]{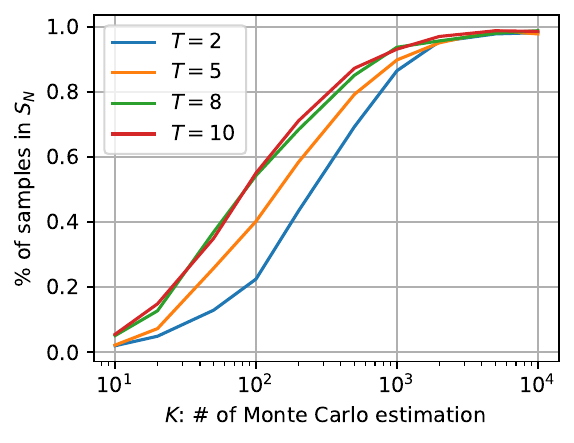}}
    \hfill
    \subfigure[$N=30$.]{\includegraphics[width=0.3\textwidth]{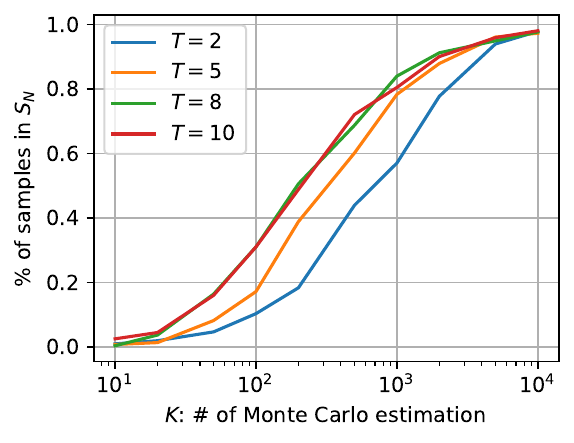}}
    \caption{Result for sampling equality-constrained sequences.}
    \label{fig:toy}
\end{figure}

\subsection{Controllable Protein Generation}
\label{sec:exp_protein}
We employ ESM3 \citep{hayes2024simulating} as the underlying protein generation model, which is a pioneering generative model for protein, capable of reasoning simultaneously over multiple modalities including sequence, structure, and function, achieving state-of-the-art performance in multiple protein generation tasks. In our experiments, we focus on generation in the \textit{sequence} domain, and fix the length of sequence to $50$. The vocabulary considered is the set of $20$ standard amino acids\footnote{They are: \texttt{A}, \texttt{C}, \texttt{D}, \texttt{E}, \texttt{F}, \texttt{G}, \texttt{H}, \texttt{I}, \texttt{K}, \texttt{L}, \texttt{M}, \texttt{N}, \texttt{P}, \texttt{Q}, \texttt{R}, \texttt{S}, \texttt{T}, \texttt{V}, \texttt{W}, \texttt{Y}.}. We will consider three tasks of controlled generation: generating hydrophilic and hydrophobic proteins, sampling proteins with a propensity for alpha-helices, and protein inpainting for higher percentage of alpha-helices. In all tasks, we impose a constraint based on the instability index \citep{guruprasad1990correlation}, a positive number that estimates the stability of a protein sequence. A protein with value between $0$ and $40$ is predicted as stable. To evaluate all the required metrics of protein sequences, we use the Biopython package \citep{cock2009biopython}, a widely used tool in computational molecular biology.

\textbf{Design of reward function.} To begin with, we first introduce a flexible way of designing the reward function $r(\cdot)$ for a given task. Suppose we have $M$ metrics $m_i:\{1,\dots,N\}^D\to\R$, $i=1,2,\dots,M$ that we are interested in, and our constraint is expressed as
$$\bigcap_{i=1}^M\{x:m_i(x)\in A_i\},$$
where $A_i\subset\R$ is an interval. We propose the following reward function:
$$r(x)=\exp\ro{-\sum_{i=1}^M w_i\dist(m_i(x),A_i)^{\alpha_i}},$$
where $\dist(a,A)$ is the distance from $a\in\R$ to the interval $A\subset\R$, $w_i>0$ is the weight of the $i$-th constraint ($m_i(x)\in A_i$), and $\alpha_i>0$ determines the shape of penalty (e.g., linear or quadratic). 

\textbf{Sampling hydrophilic/hydrophobic proteins.} Solubility is one of the key traits of protein and it is a joint consequence of amino acid sequence composition as well as 3D structure. Protein solubility is often quantified through the hydropathy index, with high hydropathy values indicating water-repelling (hydrophobic) and low hydropathy value indicating water-attracting (hydrophilic). Designing hydrophilic or hydropathic proteins have numerous important applications, yet the tasks also pose unique challenges \citep{qing2022protein}. Low hydropathy value is often desirable for therapeutic purposes as it's central to the protein expression and purification process. Hydrophilicity is also critical for a longer circulation time in the human bloodstream, potentially indicating a better therapeutic efficacy \citep{garidel2013protein}. Proteins with high hydropathy value are often designed for transmembrane proteins, important examples of which include cell surface receptors. Transmembrane proteins are often targets of drugs, especially receptors like G-protein-coupled receptors. Designing these proteins can help create more effective disease treatments by improving the capability to modulate cellular signals \citep{yang2021g}. 

In the following, we consider the challenging task of generating hydrophilic and hydrophobic protein sequences. We set the length of protein sequence to $50$, and use GRAVY (Grand Average of Hydropathy) value \citep{kyte1982a} to quantify the hydropathy level of the generated sequence, which is defined as the average hydropathy value of a peptide or protein. Negative GRAVY value indicates hydrophilic, while positive one means hydrophobic.

We compare the GRAVY values of the samples generated by ESM3 model in both uncontrolled and controlled way in \cref{tab:gravy}. For controlled generation, we fix the hyperparameters $w_2$ and $A_2$ for promoting low instability index, and experimented with multiple choices of $w_1$ and $A_1$ (see further details in \cref{app:exp}). The table presents the best generation result. Using our proposed controllable generation method, we successfully sample protein sequences with the desired values of GRAVY while maintaining protein stablility. We also observe a significant reduction in standard deviation among the controlled generated samples, highlighting the reliability of our method. 

\begin{table}[h]
\centering
\caption{Controlled generation for high and low GRAVY values. The metrics are presented in the form of mean $\pm$ std.}
\label{tab:gravy}
\begin{tabular}{|c|c|c|c|c|}
\hline
Task                     & $m_1=\text{GRAVY}$                                                                   & $m_2=\text{instability}$                                                      & GRAVY            & Instability $\downarrow$ \\ \hline
Uncontrolled             & /                                                                                    & /                                                                             & $-0.207\pm0.552$ & $41.467\pm17.906$        \\ \hline
\begin{tabular}[c]{@{}c@{}}Controlled,\\high GRAVY\end{tabular} & \begin{tabular}[c]{@{}c@{}}$w_1=30$, $\alpha_1=1$,\\ $A_1=[1,\infty)$\end{tabular}   & \begin{tabular}[c]{@{}c@{}}$w_2=5$, $\alpha_2=2$,\\ $A_2=[0,40]$\end{tabular} & $1.209\pm0.223$  & $25.176\pm10.094$        \\ \hline
\begin{tabular}[c]{@{}c@{}}Controlled,\\low GRAVY\end{tabular} & \begin{tabular}[c]{@{}c@{}}$w_1=35$, $\alpha_1=1$,\\ $A_1=(-\infty,-1]$\end{tabular} & \begin{tabular}[c]{@{}c@{}}$w_2=5$, $\alpha_2=2$,\\ $A_2=[0,40]$\end{tabular} & $-1.261\pm0.260$ & $31.569\pm8.204$         \\ \hline
\end{tabular}
\end{table}

\textbf{Sampling alpha-helix-rich protein.} Proteins fold naturally into unique three-dimensional structures based on their amino acid sequence composition. This spatial conformation further determines its molecular and cellular function. Among the possible spatial structures, alpha helix is one of the most common secondary structures in protein, where the amino acids in the polypeptide chain form into a coil (helix) through hydrogen bonding. Particularly, designing proteins rich in alpha helices is of special interest for engineering certain protein functions \citep{kortemme2024novo}, such as binding and self-assembly \citep{sakuma2024design}. These functions are central to applications such as therapeutic protein discovery \citep{walsh2006post} and alpha-helical nanofiber design \citep{zhang2003fabrication}. 

Motivated by the versatile purpose of protein with rich alpha-helix structures, in the following, we focus on generating protein sequences with secondary structure being all or predominant alpha-helices. We compare the metrics of the generated proteins by our proposed method with those generated by the unconditional model in \cref{tab:helix}. Our controlled generation method successfully produces sequences with a high predicted percentage of alpha-helices ($\text{helix}\%$). To verify the structural accuracy of the generated sequences, we use the folding algorithm provided by ESM3 to predict their 3D structures given the sequences. The result is displayed in \cref{fig:helix}, where we randomly sample five generated sequences from both the uncontrolled and controlled generated sets and predict their structure. The controlled generated samples exhibit a higher frequency of alpha-helices in the predicted structure, confirming the effectiveness of our algorithm. For further details on the figures, please refer to \cref{app:exp_seq4fig}.

We also investigate the influence of the hyperparameters $w_i$ and $A_i$ in the design of reward function $r(\cdot)$. Our empirical findings suggest an optimal choice for these parameters, providing valuable insights into designing reward function for general controlled generation problems. Additional details are presented in \cref{app:exp_hparam}.

\begin{table}[h]
\centering
\caption{Controlled generation for high alpha-helix percentage. The metrics are presented in the form of $\text{mean}\pm\text{std}$.}
\label{tab:helix}
\begin{tabular}{|c|c|c|c|c|}
\hline
Task                   & $m_1=\text{helix\%}$                                                                 & $m_2=\text{instability}$                                                      & Helix\% $\uparrow$ & Instability $\downarrow$ \\ \hline
Uncontrolled           & /                                                                                    & /                                                                             & $0.315\pm0.072$    & $41.467\pm17.906$        \\ \hline
Controlled & \begin{tabular}[c]{@{}c@{}}$w_1=50$, $\alpha_1=1$,\\ $A_1=[0.8,\infty)$\end{tabular} & \begin{tabular}[c]{@{}c@{}}$w_2=5$, $\alpha_2=2$,\\ $A_2=[0,40]$\end{tabular} & $0.600\pm0.116$    & $27.387\pm12.025$        \\ \hline
\end{tabular}
\end{table}

\begin{figure}
    \centering
    \includegraphics[width=\linewidth]{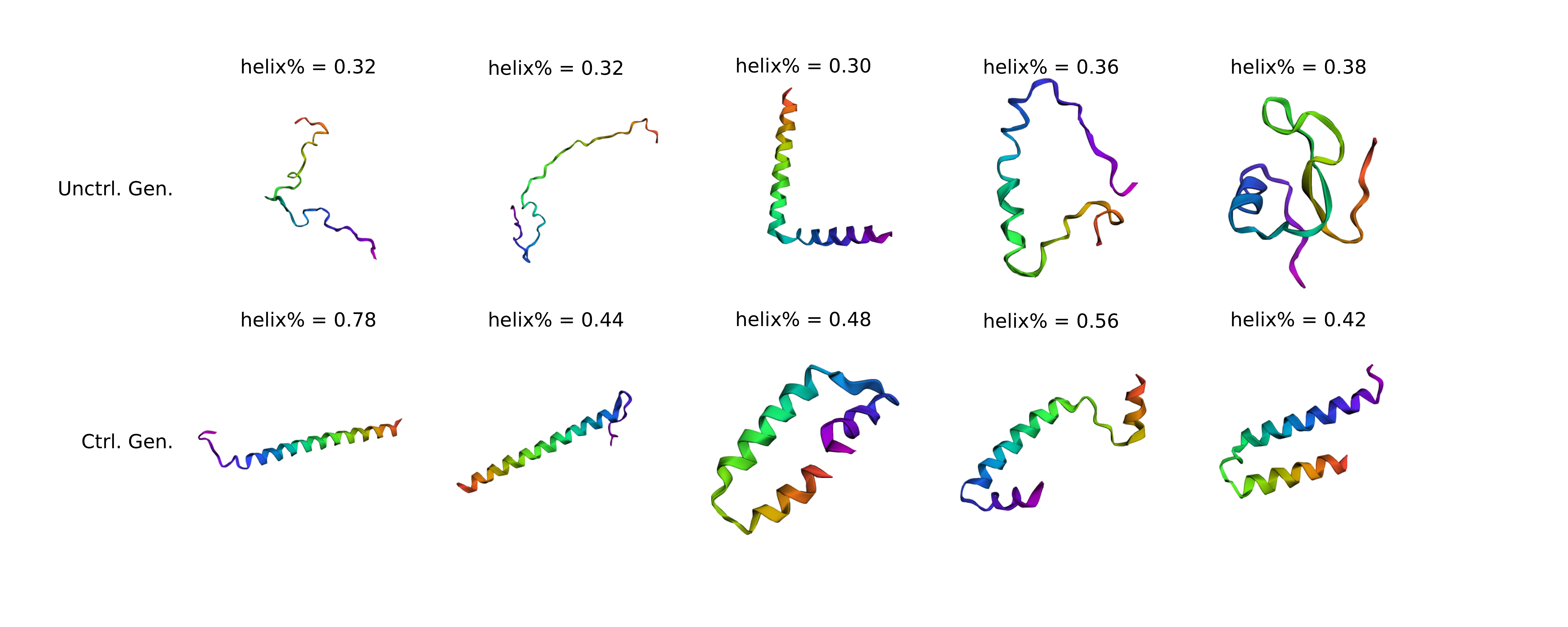}
    \caption{Structure of protein sequences predicted by ESM3. The upper row are the uncontrolled generated sequences, and the lower row are the controlled generated sequences. The sequences are randomly chosen.}
    \label{fig:helix}
\end{figure}

\textbf{Protein inpainting.} We consider the following inpainting problem: using a protein chain with high beta-sheet percentage as a prompt to generate the remaining positions in a controlled manner to maximize the percentage of alpha-helices. In particular, the prompt is chosen as a $35$-amino-acid slice from the SM-related protein of P. Abyssi (PDB ID: 1H64), chain A, which consists of $71$ amino acids and has a secondary structure composed almost entirely of beta-sheets. The \cyan{cyan} motif in the subfigure wrapped in red box in \cref{fig:inpaint} highlights the prompt. We use \cref{alg:sample_inpaint} to generate alpha-helix-rich proteins based on this prompt, and display the predicted 3D structure of three randomly generated sequences in the blue box in \cref{fig:inpaint}, demonstrating the presence of alpha-helices in the generated portions. Further experimental details can be found in \cref{app:exp_inpaint}.

\begin{figure}[t]
    \centering
    \includegraphics[width=\linewidth]{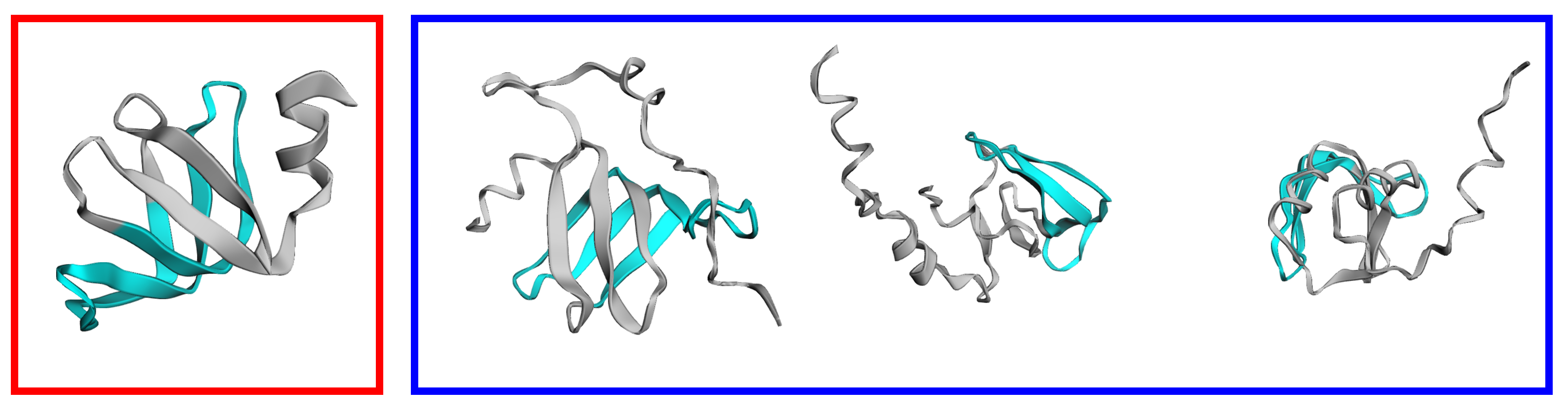}
    \caption{Visualization of the generated sequence from protein inpainting. }
    \label{fig:inpaint}
\end{figure}

\section{Conclusion and Future Work}
\label{sec:conc}
In this paper, we study the task of controllable generation for discrete masked models and introduce an efficient paradigm for plug-and-play sampling. Our approach is based on mean-field approximation and iterative masking and remasking, demonstrating promising potential for real-world applications. Future research directions include exploring alternative remasking strategies beyond uniform remasking, rigorously analyzing the error bounds of our method, and extending its application to other domains such as image, molecule, and DNA.

\bibliography{ref}
\bibliographystyle{iclr2025_conference}

\newpage
\appendix
\section{Algorithm for Inpainting}
\label{sec:inpaint}
See \cref{alg:sample_inpaint}. Its difference with \cref{alg:sample} is marked in \blue{blue}.

\begin{algorithm}[h]
   \caption{Plug-and-play controllable sampler of discrete masked models, the inpainting version.}
   \label{alg:sample_inpaint}
   \SetAlgoLined
   \SetCommentSty{emph}
   \KwIn{Vocabulary size $N$, length of sequence $D$, reward function $r(\cdot)$, masked model $\ph(\cdot)$, number of unmasking steps $T$, unmasking schedule $\gamma:[0,1]\to[0,1]$, number of Monte Carlo samples $K$, \blue{inpainted indices $\Omegab\subset\{1,\dots,D\}$, inpainted values $x^{\Omegab}\in\{1,\dots,N\}^{|\Omegab|}$}.}
   \KwOut{An approximate sample $x$ from the distribution $q(x)\propto r(x)p(x)$.}
   Initialize $x=\blue{x^{\Omegab}\oplus\mask}$, the mask positions $\cM=\blue{\Omegab^\complement}$, and the observed positions $\Omega=\blue{\Omegab}$\;
   \For{$t=1$ \KwTo $T$}{
      Compute the probabilities $\{\ph(x)_{d,n}\approx p(X^d=n|X^\Omega=x^\Omega):d\in\cM,~n\in\{1,\dots,N\}\}$ from the masked model\;
      \For{$d\in\cM$, $k\in\{1,\dots,K\}$ (in parallel)}{
         Independently sample $u_{(k)}^d\sim \ph(x)_{d,\cdot}$\;
      }
      \For{$k\in\{1,\dots,K\}$ (in parallel)}{
         Assign the sample $u_{(k)}=(u_{(k)}^d:d\in\cM)$ with a weight $w_{(k)}=r(x^\Omega\oplus u_{(k)})$\;
      }
      \For{$k\in\{1,\dots,K\}$ (in parallel)}{
         Normalize the weights $w_{(k)}$ by $w_{(k)}\gets\frac{w_{(k)}}{\sum_{j=1}^{K}w_{(j)}}$\;
      }
      Obtain one sample of $x^\cM$ via selecting $u_{(k)}$ with probability $w_{(k)}$\;
      Remask $\blue{\floor{\gamma(t/T)(N-|\Omegab|)}}$ positions in $x^\cM$ according to a defined rule (e.g., uniform remasking: sample a subset $\cM_0$ of $\cM$ with $\blue{\floor{\gamma(t/T)(N-|\Omegab|)}}$ elements uniformly at random, and remask the positions $x^d,~d\in\cM_0$)\;
      Update the masked positions $\cM\gets\{d:x^d=\mask\}$ and the observed positions $\Omega\gets\cM^\complement$\;
   }
   \Return{$x$.}
\end{algorithm}

\section{Supplementary Experimental Results}
\label{app:exp}
\subsection{Implementation Details of the Toy Example}
\label{app:exp_toy}
We first demonstrate how we choose the reward function $r(\cdot)$. Recall that the equality constraint is $\phi(x)=0$. Thus, we propose to use the reward function $r(x)=\exp(-w\max(|\phi(x)|,m)$, where the weight $w>0$ and the truncation threshold $m>0$. We choose $w=5$ and $m=10$ throughout all of our experiments. 

We fix the remasking schedule $\gamma(t)=\cos\ro{\frac\pi2r}$ as suggested by \cite{chang2022maskgit}.

The values of $K$ experimented in \cref{fig:toy} are $10,20,50,100,200,500,1000,2000,5000,10000$, while the values of $T$ experimented are $2,5,8,10$.

\subsection{Tuning the Hyperparameters for Controlled Protein Generation}
\label{app:exp_hparam}
In this section, we study how the hyperparameters $w_i$ and $A_i$ in the reward function influences the quality of controllable generation. In particular, we focus on the task of sampling alpha-helix rich protein, fix the hyperparameters for $m_2=\text{instability}$, and vary the choice of $w_1$, $A_1$ while fixing $\alpha_1=1$ throughtout the experiments. We experiment $w_1\in\{10,15,\dots,45,50\}$ and $A_1=[a_1,\infty)$ for $a_1\in\{0.5,0.6,\dots,1.3,1.4\}$, and for each pair of $(w_1,A_1)$, generate $16$ protein sequences and evaluate the metrics helix\% and instability. The results are displayed in \cref{fig:sweep_helix,fig:sweep_helix_inst}. We find that
\begin{itemize}
    \item For a fixed $a_1$, as $w_1$ grows, the helix\% of the generated sequences would grow at first, but may decline when $w_1$ is larger than some threshold. This threshold becomes smaller when $a_1$ gets larger.
    \item For small $a_1$, the instability index of the generated sequences does not vary significantly among different choices of $w_1$. But for large $a_1$, the generated sequences is prone to become more unstable when $w_1$ is large.
\end{itemize}
These results show that there may exist an optimal choice of the hyperparameters $w_1$ and $A_1$ that maximizes the helix\% of the generated sequences while keeping low instability index, which may provide insights in designing the reward function $r(\cdot)$ for general controlled generation problems.

\begin{figure}
    \centering
    \includegraphics[width=\linewidth]{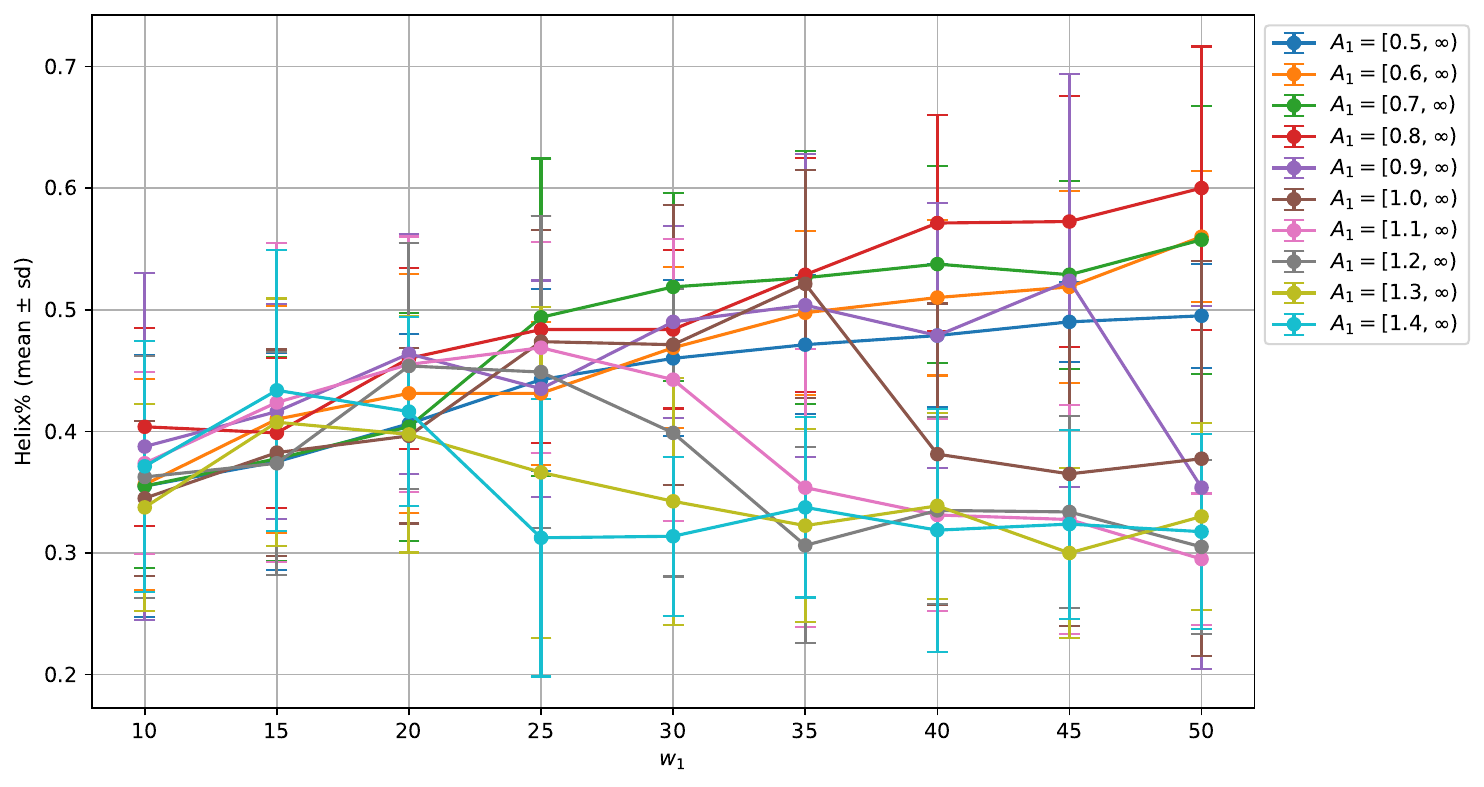}
    \caption{Influence of $w_1$ and $A_1$ on helix\% in sampling alpha-helix rich proteins.}
    \label{fig:sweep_helix}
\end{figure}
\begin{figure}
    \centering
    \includegraphics[width=\linewidth]{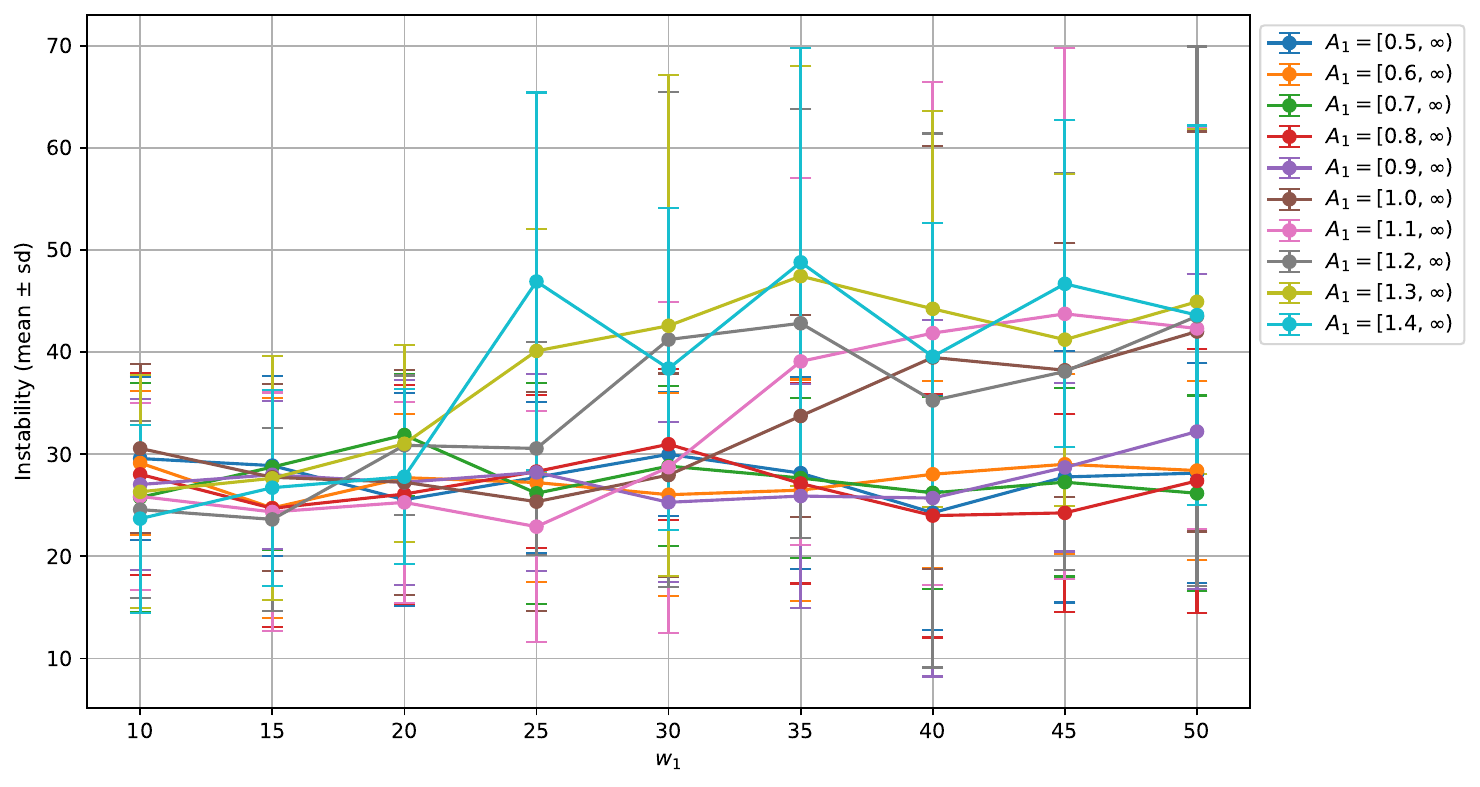}
    \caption{Influence of $w_1$ and $A_1$ on instability in sampling alpha-helix rich proteins.}    \label{fig:sweep_helix_inst}
\end{figure}

\subsection{Protein Sequences in \cref{fig:helix}.} 
\label{app:exp_seq4fig}
From left to right, the uncontrolled generated sequences are:
\begin{enumerate}
    \item \texttt{RAGPRAPPRSDAGRTRGVGRKGQLLVTGKLDAPTLLSLPAAVKSTGATRS}
    \item \texttt{MGFPNVPATAAPCPAAPTYEDYAAARGGSLPQVIQHALPVIFTAPLRKST}
    \item \texttt{MNQQSTADIRMLIEIGSFMNDPNMMTLINLLILSNVFILLIVIYYRWRSL}
    \item \texttt{MKFLARSTAKTEQLRERYLKTDIQILVYETIQGDFESIRLLPASVYNVSL}
    \item \texttt{MLPSPAFAISEAQATVESGSIAGPELLAVAVEAPSTQDHRVFAGEETYGV}
\end{enumerate}

From left to right, the controlled generated sequences are
\begin{enumerate}
    \item \texttt{MINAEAADKDECRLADLLEAKELEMLELKALYLRLEEENKALKELARAMA}
    \item \texttt{TACVEKPTHGNPTLHLAAKAFNAEIILDLAFLGQKREKLTQSNLRVISEK}
    
    \item \texttt{MNNDEDLWWKSKGKLINKDKYKNLDNTIMYMKQNMKDIKELKGLETILNA}
    \item \texttt{MNDQTREKLLKPGAAEVFAKKYRREKEAIESRAIARVADIDEALKLAAQL}
    \item \texttt{MLAEPIGNIVTYAYVIILSILLLVKLGLAENMETSVALTTLLFSNIWQLR}
\end{enumerate}

These sequence are randomly sampled from the batch of generated sequences and are not cherry-picked.

\subsection{More Details of Protein Inpainting}
\label{app:exp_inpaint}
The whole sequence of the protein is
\prt{ERPLDVIHRSLDKDVLVILKKGFEF}\cyan{\prt{RGRLIGYDIHLNVVLADAEMIQDGEVVKRYGKIVI}}\prt{RGDNVLAISPT}, where the \cyan{cyan} part with 35 amino acids is the prompt we use for inpainting. We choose the same hyperparameters as in \cref{tab:helix}. From left to right, the three sequences displayed in the blue box in \cref{fig:inpaint} are
\begin{enumerate}
    \item \prt{MSLAVLKNSEDTLVKAELKGDVSV}\cyan{\prt{RGRLIGYDIHLNVVLADAEMIQDGEVVKRYGKIVI}}\prt{RGDSVVTVHLLTALESQIHEIEDEKAKADRAVKARTKAIKA}
    \item \prt{MEGIALKALMDFQVVMKLKGGKEL}\cyan{\prt{RGRLIGYDIHLNVVLADAEMIQDGEVVKRYGKIVI}}\prt{RGTVITLIHIPEEVDFEAALKLLEKKPKKRIRRLKAEKSKK}
    \item \prt{SMSLAMQNLMGKEMKIRLAGGMCM}\cyan{\prt{RGRLIGYDIHLNVVLADAEMIQDGEVVKRYGKIVI}}\prt{RGNCIVYLDLPDSLKDELQSHERVHQYRGLKGAHAVKEKKR}
\end{enumerate}

\section{Codes for \cref{alg:sample,alg:sample_inpaint}}
\begin{lstlisting}[language=Python]
import tqdm
import torch
import numpy as np


@torch.no_grad()
def ctrl_gen(obtain_logits, obtain_reward, device,
             B, D, N, K, T,
             mask_state: int = None,
             invalid_ids: list = None,
             inpainting_pos: list = None, 
             inpainting_values: list = None,
             gamma=lambda r: np.cos(r * np.pi / 2),
             prob_low_threshold=1e-10,
             disable_tqdm=False):
    """
    B: batch size
    D: sequence length
    N: vocabulary size including the masked token
    K: number of samples for Monte Carlo estimation

    Sample from q(x) \\propto r(x) p(x), p(x) comes from the masked model, and r(x) is the reward function.

    obtain_logits: [B,D] -> [B,D,N], return the logits P(X^d|X^UM)
    obtain_reward: [*,D] -> [*], return the reward r(X)
    return: [B,D], the sampled sequence

    By default, mask_state is N-1, and the valid_ids (i.e., real tokens) are from 0 to N-2.
    If there are other invalid tokens, one can specify the invalid_ids (default is [mask_state]). 
    The logits for invalid tokens will be set to -inf.
    when inpainting_pos is not None, we will always inpaint the values at these positions with inpainting_values.
    """

    if mask_state is None:
        mask_state = N-1
    if invalid_ids is None:
        invalid_ids = [mask_state]
    elif mask_state not in invalid_ids:
        invalid_ids = invalid_ids.append(mask_state)

    if inpainting_pos is not None:
        assert len(inpainting_values) == len(inpainting_pos)
        assert all([0 <= pos < D for pos in inpainting_pos])
        assert all([0 <= val < N and val !=
                   mask_state for val in inpainting_values])
        inpainting_values = torch.tensor(
            inpainting_values, dtype=torch.int).to(device)

    def inpaint(x):
        """*,D -> *,D  inpaint"""
        if inpainting_pos is None:
            return x
        else:
            shape = x.shape
            x = x.reshape(-1, D)
            x[:, inpainting_pos] = inpainting_values
            return x.reshape(shape)

    D_essential = D - len(set(inpainting_pos))
    # only the dimensions in dims_to_sample will be sampled. The rest will be inpainted.

    x = torch.full((B, D), mask_state, dtype=torch.int).to(device)
    x = inpaint(x)
    # B,D, initialize with mask_state and inpainted values

    for t in tqdm(range(1, T+1), desc="Unmasking steps", disable=disable_tqdm):
        if int(D_essential*gamma(t/T)) == int(D_essential*gamma((t-1)/T)):
            continue  # no more tokens to unmask in this step

        # predict all the masked tokens
        logits = obtain_logits(x)  # B,D,N, p(x^d | x^UM)
        logits[:, :, invalid_ids] = -np.inf
        masked = x == mask_state  # B,D
        masked_logits = logits[masked]  # ?,N
        samples = torch.distributions.Categorical(logits=masked_logits).sample(
            (K,)).to(dtype=torch.int, device=device)  # K,?
        x = x.repeat(K, 1).reshape(K, B, D).to(device)  # K,B,D
        x[:, masked] = samples
        x = x.transpose(0, 1)  # B,K,D
        x = inpaint(x)
        probs = obtain_reward(x)  # B,K, p(y|x)
        # avoid numerical instability during division
        probs[probs < prob_low_threshold] = prob_low_threshold
        weights = probs / probs.sum(dim=1, keepdim=True)  # B,K, normalized
        selected = torch.distributions.Categorical(probs=weights).sample()  # B
        x = x[torch.arange(B), selected]  # B,D

        # remask the tokens based on their confidence scores
        confidence = torch.ones_like(x, dtype=torch.float64).to(device)
        confidence[masked] = torch.rand_like(
            confidence[masked], dtype=torch.float64).to(device)
        low_k_values, low_k_indices = torch.topk(
            confidence, k=int(D_essential*gamma(t/T)), dim=-1, largest=False)
        x[torch.arange(B).unsqueeze(1), low_k_indices] = mask_state
    return x
\end{lstlisting}
\end{document}